\newtheorem{thm}{Theorem}
\newtheorem{corollary}{Corollary}[thm]
\theoremstyle{definition}
\newtheorem{defn}{Definition}
\theoremstyle{exampstyle} \newtheorem{example}{Example}
\theoremstyle{exampstyle} \newtheorem{problem}{Problem}
\title{\LARGE \bf Formal Design of Robot Integrated Task and Motion Planning*
}
\author{Rafael Rodrigues da Silva$^{1,2}$, Bo Wu$^{1}$ and Hai Lin$^{1}$
\thanks{*The financial supports from NSF-CNS-1239222, NSF-CNS-1446288  and NSF- EECS-1253488 for this work are greatly acknowledged.}
	\thanks{$^{1}$ All authors are with Department of Electrical Engineering, University of Notre Dame, Notre Dame, IN 46556, USA.
		{\tt\small (rrodri17@nd.edu;~bwu3@nd.edu;~hlin1@nd.edu)}}
\thanks{$^{2}$ The first author would like to appreciate the scholarship support by CAPES/BR, BEX 13242/13-0}
}
\begin{document}

\maketitle
\thispagestyle{empty}
\pagestyle{empty}

\begin{abstract}

Integrated Task and Motion Planning (ITMP) for mobile robots becomes a new trend.  Most existing methods for ITMP either restrict to static environments or lack performance guarantees. This motivates us to use formal design methods for mobile robot’s ITMP in a dynamic environment with moving obstacles. Our basic idea is to synthesize a global integrated task and motion plan through composing simple local moves and actions, and to achieve its performance guarantee through modular and incremental verifications. The design consists of two steps. First, reactive motion controllers are designed and verified locally. Then, a global plan is built upon these certified controllers by concatenating them together. In particular, we model the controllers and verify their safety through formulating them as Differential Dynamic Logic (dL) formula. Furthermore, these proven safe controllers are abstracted in Counter Linear Temporal Logic over Constraint System CLTLB($\mathcal{D}$) and composed based on an encoding to Satisfiability Modulo Theories (SMT) that takes into account the geometric constraints. Since dL allows compositional verification, the sequential composition of the safe motion primitives also preserves safety properties. Illustrative examples are presented to show the effectiveness of the method.


\end{abstract}

\section{Introduction}

Traditionally, task and motion planning for mobile robots are designed separately, and they work in a  hierarchical manner with a task planner sitting on top of motion planners \cite{latombe2012robot}. Task planning is usually carried on symbolically based on an abstracted view of physical environments that ignores details in geometric or physical constraints. Hence, it is possible that there is no feasible trajectories to achieve the derived mission plans. Therefore, a recent trend is towards an Integrated Task and Motion Planning (ITMP), see e.g.,   \cite{cambon2003overview,galindo2004improving,wolfe2010combined,kaelbling2011hierarchical,kimmel2012pracsys, saha2014automated,lin2014mission,hung2014motion,nedunuri2014smt,garrett2015ffrob} and references therein.


%

Earlier efforts in ITMP, such as Asymov \cite{cambon2003overview} and SMAP  \cite{plaku2010sampling}, were still based on abstractions of the working environment and used a symbolic planner to provide a heuristic guidance to the motion planner. Recent work, such as \cite{littlefield2014extensible} and \cite{dornhege2012semantic},  introduced  a ``semantic attachment,"  i.e. a predicate that is solved by a motion planner, to the symbolic planner.  An overview of the recent developments in the symbolic motion planning can be found in \cite{lin2014mission}, where the task planning problem is reduced to model checking. Since these methods are based on abstracted symbolic models of the environments, it is a common assumption that the working environment is known or static and the robot is the only moving object (or the robot itself carries  other movable objects). However, in practice, a robot often shares its workspace with others robots or even humans, and the environment often changes over time in a way that is hard to predict. This motivates us to investigate the ITMP problem for mobile robots  in a dynamic environment with moving obstacles.

Inspired by behavior based robotics \cite{nakhaeinia2013behavior}, we adopt a hierarchical planner consisting of two layers:  global and local. Our basic idea is to synthesize a global and integrated task and motion plan through composing simple local moves and actions, and to achieve its performance guarantee through modular incremental verifications. The design consists of two steps. First, basic motion primitives are designed with verified performances. Then, a global plan is built upon these certified motion primitives. Since the method proposed here is of bottom-up and compositional nature, so we call it as CoSMoP (Composition of Safe Motion Primitives). 

In the first step, we propose to use a formally verified motion controllers that we call safe motion primitives. These primitives are designed offline, modeled and verified in Differential Dynamic Logic (d$\mathcal{L}$) \cite{platzer2010logical}, for which verification software tools are available, e.g., KeYmaera \cite{platzer2010logical}. In particular, we use the Dynamic Window Approach (DWA) \cite{fox1997dynamic}  as obstacle avoidance motion primitives in this paper. DWA is a widely adopted and efficient approach for mobile robots to avoid collisions in uncertain and dynamic environments.  The safety of an extended DWA on collision avoidance for moving obstacles has been formally proved in \cite{mitsch2013provably} using d$\mathcal{L}$ and hybrid system verification. With this proof, we can abstract them to the global layer, where the task and motion plans are integrated. 

In the second step, those safe motion primitives are encoded to a Satisfiability Modulo Theories (SMT) solver as motion primitive constraints. This layer synthesizes a composition of pairs of actions (i.e. safe motion primitives) and waypoints (i.e. terminal positions), which is the sequential execution of actions that the robot must perform to ensure a task specification formally. The CoSMoP encodes an ITMP problem to the SMT by extending the Bounded Satisfiability Checking (BSC) \cite{pradella2013bounded} and using the Counter Linear Temporal Logic over Constraint System CLTLB($\mathcal{D}$) \cite{bersani2010bounded} language, a Linear Temporal Logic (LTL) extension. The BSC models consist of temporal logic rather than transition systems; thus, the problem encoding is more compact and elegant. Moreover, it was also shown that if the constraint system $\mathcal{D}$ is decidable, then so is the CLTLB($\mathcal{D}$), and it can be encoded to SMT \cite{bersani2010bounded}. Therefore, encoding the ITMP problem using CLTLB($\mathcal{D}$) language allows the description of a wide range of system properties in a problem that is decidable. 

In summary, the contribution of this work is to provide an automatic synthesis that is provably safe even for unexpected moving obstacles that, to the best of our knowledge, has not been attempted before for ITMP:
\begin{itemize}
	\item Unlike \cite{nakhaeinia2013behavior}, where the motion plan is not formally verified, in our approach, the performance of the resulting integrated task and motion plan is formally guaranteed. 
	\item Unlike \cite{saha2014automated, lin2014mission}, we do not assume static environment and complete knowledge of the other moving agents that is required for verification of symbolic partitioned environments.
	\item Unlike \cite{nedunuri2014smt, littlefield2014extensible, lin2014mission, garrett2015ffrob}, we do not assume a static environment where the robot is the only moving agent, which is assumed in these others ITMP approaches.
	\item Unlike \cite{bouraine2012provably,nakhaeinia2013behavior,mitsch2013provably,althoff2014online,hess2014formal}, where only motion specifications are considered, we combine task and motion specifications that allow specifications such as moving objects in the environment. 
	\item Unlike \cite{pradella2013bounded,bersani2010bounded}, we use CLTLB($\mathcal{D}$) for automatic synthesis instead of model checking.
\end{itemize}

 The rest of the paper is organized as follows.  Section \ref{sec:preliminaries} presents some background for understanding CoSMoP approach and defines the scenario used in this work. Section  \ref{sec:CoSMoP} introduces the CoSMoP design procedure and formulates the problem. Section \ref{sec:prim} presents the design of motion primitives for the scenario proposed here. Section \ref{sec:comp} presents how to synthesize a global and integrated plan using SMT solver. Section \ref{sec:sim} studies which parameters affect the execution time the most. Section \ref{sec:conclusion} concludes the paper with a discussion and proposes possible future works.

\section{Preliminaries}\label{sec:preliminaries}

\subsection{Differential Dynamic Logic}

The Differential Dynamic Logic $d\mathcal{L}$ verifies a symbolic hybrid system model, and, thus, can assist in verifying and finding symbolic parameters constraints. Most of the time, this turns into an undecidable problem for model checking \cite{platzer2010logical}. Yet, the iteration between the discrete and continuous dynamics is nontrivial and leads to nonlinear parameter constraints and nonlinearities in the dynamics. Hence, the model checking approach must rely on approximations. On the other hand, the $d\mathcal{L}$ uses a deductive verification approach to handling infinite states, it does not rely on finite-state abstractions or approximations, and it can handle those nonlinear constraints. 

The hybrid systems are embedded to the d$\mathcal{L}$ as hybrid programs, a compositional program notation for hybrid systems. 

\begin{defn}[Hybrid Program]
	A hybrid program \cite{platzer2010logical} ($\alpha$ and $\beta$) is defined as:
	\begin{equation*}
	\alpha, \beta ::= \begin{cases}
	x_1 := \theta_1,...,x_n:=\theta_n \mid  ?\chi \mid \alpha ; \beta \mid \alpha \cup \beta \mid \alpha^* \mid \\
	x_1^{\prime} := \theta_1,...,x_n^{\prime}:=\theta_n \& \chi			
	\end{cases}  
	\end{equation*}
	where:
	\begin{itemize}
		\item $x$ is a state variable and $\theta$ a first-order logic term.
		\item $\chi$ is a first-order formula.
		\item $x_1 := \theta_1,...,x_n:=\theta_n$ are discrete jumps, i.e. instantaneous assignments of values to state variables. 
		\item $x_1^{\prime} := \theta_1,...,x_n^{\prime}:=\theta_n \& \chi$ is a differential equation system that represents the continuous variation in system dynamics. $x_i^{\prime} := \theta_i$ is the time derivative of state variable $x_i$, and $\& \chi$ is the evolution domain. 
		\item $?\chi$ tests a first-order logic at current state.
		\item $\alpha ; \beta$ is a sequential composition, i.e. the hybrid program $\beta$ will start after $\alpha$ finishes. 
		\item $\alpha \cup \beta$ is a nondeterministic choice.
		\item $\alpha^*$ is a nondeterministic repetition, which means that $\alpha$ will repeat for finite times. 
	\end{itemize}
\end{defn}

Thus, we can define the $d\mathcal{L}$ formula, which is a first-order dynamic logic over the reals for hybrid programs.

\begin{defn}[$d\mathcal{L}$ formulas]
	A $d\mathcal{L}$ formula \cite{platzer2010logical} ($\phi$ and $\psi$) is defined as:
	\begin{equation*}
	\phi, \psi ::= \chi \mid \neg \phi \mid \phi \wedge \psi \mid \forall x \phi \mid \exists x \phi \mid [\alpha] \phi \mid \langle \alpha \rangle \phi
	\end{equation*}
	where:
	\begin{itemize}
		\item $[\alpha] \phi$ holds true if $\phi$ is true after all runs of $\alpha$.
		\item $\langle \alpha \rangle \phi$ holds true if $\phi$ is true after at least one runs of $\alpha$.
	\end{itemize}
\end{defn}

$d\mathcal{L}$ uses a compositional verification technique that permits the reduction of a complex hybrid system into several subsystems \cite{platzer2010logical}. This technique divides a system $\psi \rightarrow [\alpha] \phi$ in an equivalent formula $\psi_1 \rightarrow [\alpha_1] \phi_1 \wedge \psi_2 \rightarrow [\alpha_2] \phi_2$, where each $\psi_i \rightarrow [\alpha_i] \phi_i$ can be proven separately. In our approaches we use this technique backwards, we prove a set of $d\mathcal{L}$ formulas $\psi_i \rightarrow [\alpha_i] \phi_i$, where each one is the $i^{th}$ motion primitive model, and we use the SMT to compose an equivalent $\psi \rightarrow [\alpha] \phi$ that satisfies a mission task. Therefore, the synthesized hybrid system performance is formally proven. 

%
%

\subsection{Counter Linear Temporal Logic Over Constraint System}

We express the specification of an autonomous mobile robot using Counter Linear Temporal Logic Over Constraint System CLTLB($\mathcal{D}$) defined in \cite{bersani2010bounded}. This language is interpreted over Boolean terms $p \in AP$ or arithmetic constraints $R \in \mathcal{R}$ belong to a general constraint system $\mathcal{D}$, where $AP$ is a set of atomic propositions and $\mathcal{R}$ is a set of arithmetic constraints. Thus, the semantics of a CLTLB($\mathcal{D}$) formula is given in terms of interpretations of a finite alphabet $\Sigma \in \{AP, \mathcal{R}\}$ on finite traces over a finite sequence $\rho$ of consecutive instants of time with length $K$, meaning that $\rho(k)$ is the interpretation of $\Sigma$ at instant of time $k \in \mathcal{N}_{\rho}, \mathcal{N}_{\rho} = \{0,...,K\}$. Moreover, the arithmetic terms of an arithmetic constraint $R \in \mathcal{R}$ are variables $x$ over a domain $D \in \{\mathbb{Z}, \mathbb{R}\}$ valuated at instants $i$ and, thus, are called arithmetic temporal terms \textit{a.t.t.},   

\begin{defn}[Arithmetic Temporal Term]
	A CLTLB($\mathcal{D}$) arithmetic temporal term (\textit{a.t.t.}) $\varphi$ is defined as:
	\begin{equation*}
	\varphi ::= x \mid \bigcirc \varphi \mid \bigcirc^{-1} \varphi
	\end{equation*}
	where $\bigcirc$ and $\bigcirc^{-1}$ stands for next and previous operator.
\end{defn}

Therefore, a CLTLB($\mathcal{D}$) formula is a LTL formula over the \textit{a.t.t.} defined as below.

\begin{defn}[Formula]
	A CLTLB($\mathcal{D}$) formula ($\phi$, $\phi_1$ and $\phi_2$) is defined as,  
	\begin{equation*}
	\phi, \phi_1, \phi_2 ::= 
	\begin{cases}
	p \mid R(\varphi_1,...,\varphi_n) \mid \neg \phi \mid \phi_1 \wedge \phi_2 \mid \\ 
	\bigcirc \phi  \mid \bigcirc^{-1} \phi \mid \phi_1 \mathbf{U} \phi_2 \mid \phi_1 \mathbf{S} \phi_2
	\end{cases}
	\end{equation*}
	where,
	\begin{itemize}
		\item $p \in AP$ is a atomic proposition, and $R \in \mathcal{R}$ is a relation over the \textit{a.t.t.} such as, for this work, we limit it to linear equalities or inequalities, i.e. $R(\varphi_1,...,\varphi_n) \equiv \sum_{i=1}^{n} c_i \cdot \varphi_i \# c_0$, where $\# \equiv \langle =, <, \leq, >, \geq \rangle$ and  $c_i, \varphi_i \in D$. 
		\item $\bigcirc$, $\bigcirc^{-1}$, $\mathbf{U}$ and $\mathbf{S}$ stands for usual next, previous, until and since operators on finite traces, respectively.
	\end{itemize}
\end{defn}

Based on this grammar, it can also use others common abbreviations, including:

\begin{itemize}
	\item Standard boolean, such as $true$, $false$, $\vee$ and $\rightarrow$.
	\item $\Diamond \phi$ that stands for $true \mathbf{U} \phi$, and it means that $\phi$ eventually holds before the last instant (included). 
	\item $\square \phi$ that stands for $\neg \Diamond \neg \phi$, and it means that $\phi$ always holds until the last instant. 
	\item $Last [\phi]$ that stands for $\Diamond (\neg \bigcirc true) \wedge \phi$, where $\neg \bigcirc true$ on finite trace is only $true$ at last instant. Thus, it means that $\phi$ is true at the last instant of the sequence $\rho$. 
\end{itemize}

A CLTLB($\mathcal{D}$) formula is verified in a Bounded Satisfiability Checking (BSC) \cite{pradella2013bounded}. Hence, it is interpreted on a finite sequence $\rho$ with length $K$. Therefore, $\rho(k) \vDash p$ means that $p$ holds true in the sequence $\rho$ at instant $k$ ($p \vdash \rho(k)$).

\begin{defn}[Semantics]
	The semantics of a CLTLB($\mathcal{D}$) formula $\phi$ at an instant $k \in \mathcal{N}_{\rho}$ is as follow:
	
	\begin{itemize}
		\item $\rho(k) \vDash p \Longleftrightarrow p \vdash \rho(k)$.
		\item $\rho(k) \vDash R(\varphi_1,..., \varphi_n)  \Longleftrightarrow R(\varphi_1,..., \varphi_n) \vdash \rho(k)$.
		\item $\rho(k) \vDash \neg \phi \Longleftrightarrow \rho(k) \nvDash \phi$.
		\item $\rho(k) \vDash \phi_1 \wedge \phi_2 \Longleftrightarrow \rho(k)  \vDash \phi_1 \wedge \rho(k)  \vDash \phi_2$.
		\item $\rho(k) \vDash \bigcirc \phi \Longleftrightarrow \rho(k+1)  \vDash \phi$.
		\item $\rho(k) \vDash \bigcirc^{-1} \phi \Longleftrightarrow \rho(k-1)  \vDash \phi$.
		\item $\rho(k) \vDash \phi_1 \mathbf{U} \phi_2 \Longleftrightarrow \begin{cases}
		\exists i \in [k,K]: \rho(i)  \vDash \phi_2 \wedge \\
		\forall j \in [k,i-1]: \rho(j) \vDash \phi_1
		\end{cases}$.
		\item $\rho(k) \vDash \phi_1 \mathbf{S} \phi_2 \Longleftrightarrow \begin{cases}
		\exists i \in [0,k]: \rho(i)  \vDash \phi_2 \wedge \\
		\forall j \in [i+1,k]: \rho(j) \vDash \phi_1
		\end{cases}$.
	\end{itemize}
\end{defn}

\subsection{Scene Description}

As a motivating example, we consider a building with two way doors that an assistant robot needs to move objects with its gripper. The robot shares its workspace with other robots and humans, and we call this scenario Clean Up. The robot should be able to find and move those objects to designated areas through doors while stay inside the workspace. 

In this scenario, each state robot state $q_r$ is a triple $\langle x, y, \alpha \rangle$ representing the robot pose in 2D, where $(x,y) \in \mathbb{Z}^2$ is the position in $mm$ and $\alpha \in \mathbb{R}$ is the heading angle in degrees. Yet, each object $j$ state $q_b^j$ is a triple $\langle x, y, p \rangle$ representing its 2D position $(x, y) \in \mathbb{Z}^2$ in $mm$ and a proposition $p \in AP$ that holds true when the robot is carrying this object. Based on a given scenario, scene description provides basic information on robots and the environment they work in. 

\begin{defn}[Scene Description] \label{def:scene}
    Scene description is a tuple $\mathcal{M} = \langle \mathcal{O}, \mathcal{D}, \mathcal{A}, \mathcal{B}, \mathcal{W} \rangle$:
	\begin{itemize}
		\item Obstacles $\mathcal{O}$: a set of rectangular obstacles in parallel to the axis $o_j \in \mathcal{O}: j \in N_{\mathcal{O}} = \{1,...,|\mathcal{O}|\}$ specified by two points $o_j = \langle (x_i, y_i), (x_f,  y_f) \rangle$ describing a pair of diagonal vertexes;
		\item Doors $\mathcal{D}$: a set of doors $d_j \in \mathcal{D}: j \in N_{\mathcal{D}} = \{1,...,|\mathcal{D}|\}$ that describe two robot poses $q_1$ and $q_2$ necessary to push and pass through this door, i.e. $d_j = \langle q_1, q_2 \rangle: q_1 = q_2 = \langle x, y, \alpha\rangle$. 
		\item Agent $a$: the robot $a = \langle l \rangle$ is abstracted as a square with length $l$.
		\item Objects $\mathcal{B}$: a set of movable objects $b_j \in \mathcal{B}: j \in N_{\mathcal{B}} = \{1,...,|\mathcal{B}|\}$ and $b_j = \langle l \rangle$ that is abstracted as a square with length $l$.
		\item Workspace $\mathcal{W} = \langle x, y, l \rangle$ : the workspace dimension description, which is assumed to be a square with center at position $( x, y)$ and length $l$.
	\end{itemize}
\end{defn}

 Now, we can define the scene description for this particular scenario as shown below.

\begin{example}\label{ex:example01}

 Consider the workspace shown in Fig. \ref{fig:example01}. This scene description has two obstacles $\mathcal{O}=\{\langle (-1500,-2500), (-1500,2500) \rangle, \langle (-1500,0), (2500,0) \rangle\}$ that refer to two walls that Door $1$ (also Door $2$) and Door $3$ are located. The Door $1$ is described by $d_1 = \langle (-2000, -500, 0^o), (-1000, -500, 180^o) \rangle$, the Door $2$ by $d_2 = \langle (-2000, 1000, 0^o), (-1000, 1000, 180^o) \rangle$, and the Door $3$ by $d_3 = \langle (-1000, 500, 270^o), (-1000, -500, 90^o) \rangle$. There are two objects abstracted as a square of $100mm$ (i.e. $b_1.l = b_2.l = 100$) initially at $q_b^1 = \langle 1900,-1000,false \rangle$ and $q_b^2 = \langle 2000,-1000,false \rangle$, where initially neither of the two objects is picked up. The robot is abstracted as a square of $400mm$ (i.e. $a.l = 400$) and starts at $q_r = \langle -2000, 0, 0.0 \rangle$. 

\begin{figure}[!t]
	\centering
	\includegraphics[width=2.5in]{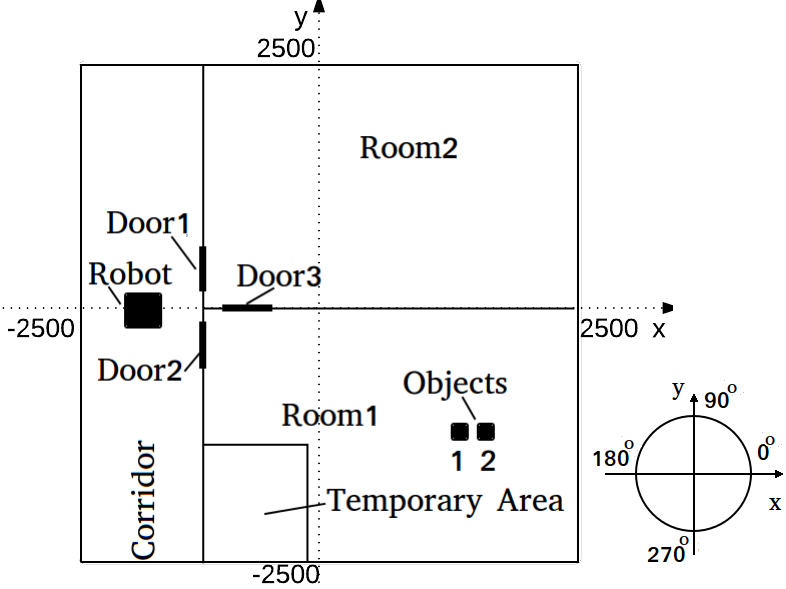}
	\caption{A example of blueprint for the clean up scenario.}
	\label{fig:example01}
\end{figure}
\end{example}

\section{CoSMoP framework}\label{sec:CoSMoP}

	
	This section describes a formal bottom-up approach that features a two-layer hierarchical ITMP architecture as shown in Fig. \ref{fig:framework}. The local layer implements reactive motion controllers such as the DWA that realize motion planning incorporating all kinematic and geometric constraints for dynamic environment with moving obstacles. These controllers are designed offline and are abstracted to the global layer as motion primitives specifications $\phi_{\mathcal{P}}(\mathcal{M})$, in CLTLB($\mathcal{D}$). 	The global layer generates a constraint system in the Constraint Generator based on scene description $\mathcal{M}$ and the task specification $\phi_{\mathcal{G}}$, in CLTLB($\mathcal{D}$), and encode it to a SMT solver. If this constraint system is satisfiable, a plan that is a roadmap for the local layer is extracted to the local layer. We assume that $\mathcal{M}$ have geometric details about the environment enough for global layer to search for a satisfiable plan. Let $Q_r$, $Q_b$ and $Q_{\pi}$ be a sequence of assigned values to robot and object states and assigned motion primitive at each instant $k \in \mathcal{N}_{\rho}$, respectively, we can formulate our problem as follows.   
	\begin{problem}
		Given a scene description $\mathcal{M}$, initial conditions, a task specification $\phi_{\mathcal{G}}$, the trace length $K$, design a set $\mathcal{U}$ of the reactive motion controllers in the local layer and respective motion primitives specifications $\phi_{\mathcal{P}}(\mathcal{M})$ and check if the specification $\phi_{\mathcal{G}}$ is satisfiable in the scene $\mathcal{M}$ using the controllers $\mathcal{U}$. If yes, find a trace $s$ with length $K$, where $s(k) = \langle q_{r}(k), \pi(k) \rangle$  at instant $k \in \mathcal{N} = \{1,...,K\}$, $q_{r}(k) \in Q_r$ is a robot state and $\pi(k) \in Q_{\pi}$ is a motion primitives at instant $k$ such as $\pi(k)$ defines what controller $u \in \mathcal{U}$ to take at $q_{r}(k-1)$ to go to $q_{r}(k)$.
	\end{problem}
	\begin{figure}[!t]
		\centering
		\includegraphics[width=3.5in]{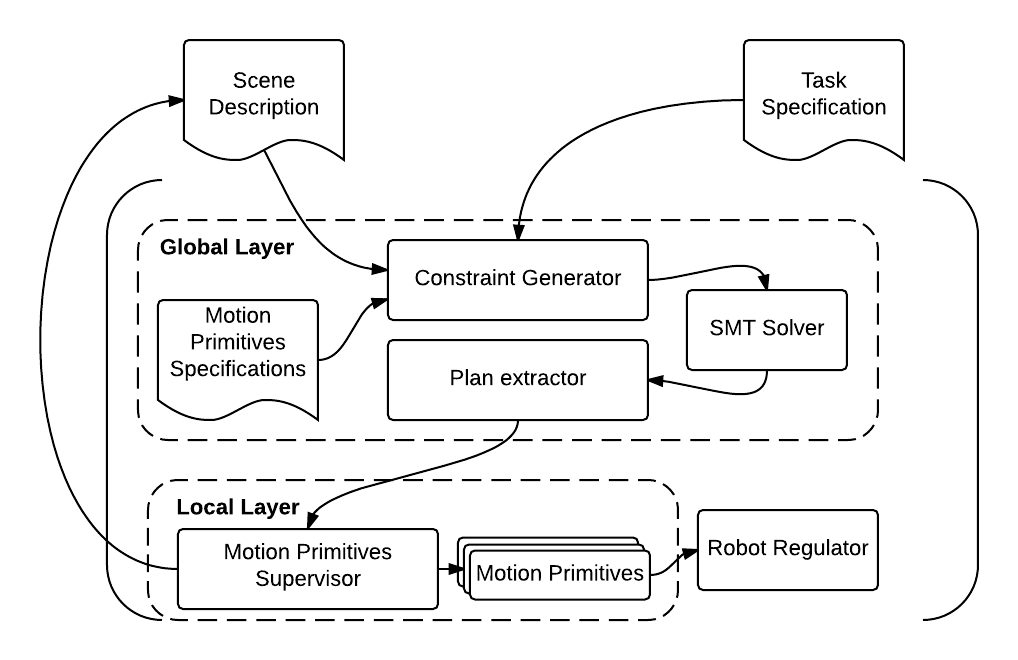}
		\caption{CoSMoP framework.}
		\label{fig:framework}
	\end{figure}
	
	Note that we are restricted to take at most $K$ actions for the task, therefore it is a bounded time planning problem. The motion primitives are abstract actions that a robot can execute, such as moving to some place, picking up objects and so on. 
	\begin{defn}[Motion Primitive]
		The Motion Primitive $\Pi \in \mathcal{P}$ is formally defined as $\langle u, \phi_{\Pi} \rangle$, where $\mathcal{P}$ is a set of abstract actions available to the global layer. The symbol $u \in \mathcal{U}$ is a reactive motion controller. The specification $\phi_{\Pi}$ constrains the states $q(k-1)$ and $q(k)$ such as $q(k) \in [q_{r}(k) \in Q_r] \cup [q_{b}^j(k) \in Q_b] \cup [\pi(k) \in Q_{\pi}]$ based on the Scene Description $\mathcal{M}$ when the robot takes $u$.
	\end{defn}

	
	Basically the framework needs to solve three subproblems. First, we design reactive motion controllers $u_i \in \mathcal{U}: i \in \mathcal{N}_{\mathcal{U}} = \{1,...,|\mathcal{N}_{\mathcal{U}}|\}$ to provide enough maneuvers for the robot to complete the given task. If it is needed, the design criterion is that each controller modeled as a hybrid program $\alpha_{u_i}$ must formally ensure a safety property $\phi_{u_i}^{safe}$ after any execution, if the initial state is safe and satisfies the initial condition $\phi_{u_i}^{initial}$, i.e. $\phi_{u_i}^{initial} \wedge \phi_{u_i}^{safe} \rightarrow [\alpha_{u_i}] \phi_{u_i}^{safe}$ in $d\mathcal{L}$. 
	
	Second, we abstract these controllers from the local to the global layer, i.e. to design the specification $\phi_{\Pi}$, in CLTLB($\mathcal{D}$) formulas, to enforce safety requirement after concatenation of designed motion primitives. We do so by ensuring that for any plan $s$ of size $K$, where $\forall \pi(k) \in Q_{\pi}: \pi(k) \in \mathcal{P} , u(k) \in \pi(k)$, the following two conditions hold.
	\begin{itemize}
		\item For each $i\in \mathcal{N}$, $\phi_{u(k)}^{safe}$ is satisfiable for at least one trajectory between  $q_{r}(k-1)$ and $q_{r}(k)$.
		\item For each $i\in \mathcal{N}$, $q_{r}(k-1) \vDash \phi_{u(k)}^{initial}$.
		
	\end{itemize} This specification depends on the scene description $\mathcal{M}$ and the conjunction of all specifications $\phi_{\Pi}$ is called the motion primitives specification $\phi_{\mathcal{P}}(\mathcal{M})$. Since the output of global layer is a sequence $s$, where each $\pi(k) \in Q_{\pi}$ assigns one of $\Pi \in \mathcal{P}$. If $\phi_{\mathcal{P}}(\mathcal{M})$ constraints $s$ to satisfy both conditions, then the reachable states after any execution of the controller $u(k)$ will be constraint to satisfies $\phi_{u(k)}^{safe}$ and it will satisfy $\phi_{u(k+1)}^{initial} \wedge \phi_{u(k+1)}^{safe}$ before execute the next controller $u(k+1)$. Therefore, the hybrid program of resulting plan $s$ is $\phi_{u(1)}^{initial} \wedge \phi_{u(1)}^{safe} \rightarrow [\alpha_{u(1)};?(\phi_{u(1)}^{safe});?( \phi_{u(2)}^{initial} \wedge \phi_{u(2)}^{safe});\alpha_{u(2)};...;\alpha_{u(K)}] \phi_{u(K)}^{safe}$.
	The following theorem formally proves that the composition of those motion primitives will also guarantee the safety properties $\forall k \in \mathcal{N}: \phi_{u(k)}^{safe}$, after executing a plan.	
	\begin{thm}
		If a plan $s$ has size $K$, and satisfies $s \vDash \phi_{\mathcal{P}}(\mathcal{M})$ and  $\bigwedge_{\forall i \in \mathcal{N}_{\mathcal{U}}} \phi_{u_i}^{initial} \wedge \phi_{u_i}^{safe} \rightarrow [\alpha_{u_i}] \phi_{u_i}^{safe}$ is valid, then it will also satisfy all safety properties $s \vDash \bigwedge_{k \in \mathcal{N}} \phi_{u(k)}^{safe}$, where $s(k) = \langle q_{r}(k), \pi(k) \rangle$  at instant $k \in \mathcal{N}$ and $u(k) \in \pi(k)$.
	\end{thm}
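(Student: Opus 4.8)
The plan is to argue by induction on the instant $k \in \mathcal{N} = \{1,\dots,K\}$, maintaining the invariant that immediately before the controller $u(k)$ is run the current robot state satisfies $\phi_{u(k)}^{initial} \wedge \phi_{u(k)}^{safe}$. The workhorse is the $d\mathcal{L}$ sequential-composition axiom $[\alpha;\beta]\phi \equiv [\alpha][\beta]\phi$ together with the test axiom $[?\chi]\phi \equiv (\chi \rightarrow \phi)$; this is exactly the compositional technique of the preliminaries used \emph{backwards}, letting me peel the monolithic hybrid program $[\alpha_{u(1)};?(\phi_{u(1)}^{safe});?(\phi_{u(2)}^{initial}\wedge\phi_{u(2)}^{safe});\alpha_{u(2)};\dots;\alpha_{u(K)}]\phi_{u(K)}^{safe}$ of the plan into one per-step obligation at a time, each of the form $\phi_{u(k)}^{initial}\wedge\phi_{u(k)}^{safe}\rightarrow[\alpha_{u(k)}]\phi_{u(k)}^{safe}$, which is precisely the $k$-th conjunct supplied by the second hypothesis.

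First I would set up the base case. From $s \vDash \phi_{\mathcal{P}}(\mathcal{M})$, the second condition enforced by $\phi_{\mathcal{P}}(\mathcal{M})$ at $k=1$ gives $q_r(0) \vDash \phi_{u(1)}^{initial}$, and the supplied initial conditions give $q_r(0) \vDash \phi_{u(1)}^{safe}$, establishing the invariant at $k=1$. For the inductive step, suppose the invariant holds at $k$, i.e. $q_r(k-1)\vDash\phi_{u(k)}^{initial}\wedge\phi_{u(k)}^{safe}$. The $k$-th conjunct of the second hypothesis, the validity of $\phi_{u(k)}^{initial}\wedge\phi_{u(k)}^{safe}\rightarrow[\alpha_{u(k)}]\phi_{u(k)}^{safe}$, then yields $q_r(k)\vDash\phi_{u(k)}^{safe}$ for the endpoint reached by running $\alpha_{u(k)}$; this is exactly the safety claim $s(k)\vDash\phi_{u(k)}^{safe}$ at instant $k$, and it also discharges the test $?(\phi_{u(k)}^{safe})$. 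To re-establish the invariant at $k+1$ I would invoke the two conditions that $\phi_{\mathcal{P}}(\mathcal{M})$ imposes on $s$ at instant $k+1$: the second gives $q_r(k)\vDash\phi_{u(k+1)}^{initial}$, and the first, asserting that $\phi_{u(k+1)}^{safe}$ holds along the whole segment from $q_r(k)$ to $q_r(k+1)$ and in particular at its start point, gives $q_r(k)\vDash\phi_{u(k+1)}^{safe}$. Together they make the handoff test $?(\phi_{u(k+1)}^{initial}\wedge\phi_{u(k+1)}^{safe})$ pass, the invariant is restored, and chaining through $k=K$ delivers $s\vDash\bigwedge_{k\in\mathcal{N}}\phi_{u(k)}^{safe}$.

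The one point that needs genuine care, and which I expect to be the main obstacle, is the semantic bridge between the two logics. The conclusion $s\vDash\bigwedge_k\phi_{u(k)}^{safe}$ lives in the finite-trace CLTLB($\mathcal{D}$) semantics over the discrete waypoints, whereas the per-controller safety is established in the $d\mathcal{L}$ hybrid-program semantics over continuous executions. Because $[?\chi]\phi\equiv(\chi\rightarrow\phi)$, the validity of the composed box formula is by itself vacuously satisfiable whenever a test fails, so it does not on its own certify safety; the real content is that $s\vDash\phi_{\mathcal{P}}(\mathcal{M})$ forces every test in the composed program to pass along the concrete run corresponding to $s$. I would therefore have to make precise that the two discrete conditions guaranteed by $\phi_{\mathcal{P}}(\mathcal{M})$ are exactly the abstractions of these tests, so that the witnessing run of the box modality is the one realized by $s$ and the per-segment $d\mathcal{L}$ safety transfers, waypoint by waypoint, into the discrete satisfaction asserted in the conclusion. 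Once this correspondence is pinned down, the induction above is routine.
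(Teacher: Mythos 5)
Your proof is correct, and it rests on the same ingredients as the paper's own proof: the composed hybrid program $[\alpha_{u(1)};?(\phi_{u(1)}^{safe});?(\phi_{u(2)}^{initial}\wedge\phi_{u(2)}^{safe});\alpha_{u(2)};\dots;\alpha_{u(K)}]\phi_{u(K)}^{safe}$, the axioms $[;]$ and $[?]$, and the per-primitive validity hypothesis. The difference is in how the argument is organized. The paper argues in one shot: it applies $[;]$, $[?]$ and $\rightarrow r$ to claim the composed formula is \emph{equivalent} to $\bigwedge_{k\in\mathcal{N}}\big(\phi_{u(k)}^{initial}\wedge\phi_{u(k)}^{safe}\rightarrow[\alpha_{u(k)}]\phi_{u(k)}^{safe}\big)$, and concludes directly from the validity of each conjunct. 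You instead run an induction over instants with the semantic invariant that the state reached before executing $u(k)$ satisfies $\phi_{u(k)}^{initial}\wedge\phi_{u(k)}^{safe}$, and you use $s\vDash\phi_{\mathcal{P}}(\mathcal{M})$ to re-establish that invariant at every handoff. What your version buys is exactly the point you flag at the end: strictly speaking, the paper's ``equivalence'' holds only in the direction actually needed (validity of the conjunction implies validity of the composed formula), and validity of the composed box formula is by itself compatible with a failed test --- vacuous satisfaction --- so it cannot alone deliver $s\vDash\bigwedge_{k}\phi_{u(k)}^{safe}$ at the intermediate instants. The paper discharges this only implicitly, in the framework discussion preceding the theorem (the two conditions that $\phi_{\mathcal{P}}(\mathcal{M})$ imposes on any plan); your induction pulls that glue into the proof itself, making explicit that $\phi_{\mathcal{P}}(\mathcal{M})$ forces every test along the realized run to pass, so that each $d\mathcal{L}$ box transfers safety to the waypoint $q_r(k)$ and the CLTLB($\mathcal{D}$) trace-level conclusion follows. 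The cost is length; the benefit is that the semantic bridge between discrete trace satisfaction and hybrid-program runs, which the paper's one-line proof glosses over, is stated and used where it is needed.
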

	\begin{proof}
		The d$\mathcal{L}$ formula of $s$ that satisfies the specification $\phi_{\mathcal{P}}(\mathcal{M})$ is $\phi_{u(1)}^{initial} \wedge \phi_{u(1)}^{safe} \rightarrow [\alpha_{u(1)};?(\phi_{u(1)}^{safe});?( \phi_{u(2)}^{initial} \wedge \phi_{u(2)}^{safe});\alpha_{u(2)};...;\alpha_{u(K)}] \phi_{u(K)}^{safe}$. By applying the rules $[;]$, $[?]$ and $\rightarrow r$ \cite{platzer2010logical}, we find the equivalent formula $\bigwedge_{\forall k \in \mathcal{N}} \phi_{u(k)}^{initial} \wedge \phi_{u(k)}^{safe} \rightarrow [\alpha_{u(k)}] \phi_{u(k)}^{safe}$. Therefore, if we ensure that $\forall i \in \mathcal{N}_{\mathcal{U}}: \phi_{u_i}^{initial} \wedge \phi_{u_i}^{safe} \rightarrow [\alpha_{u_i}] \phi_{u_i}^{safe}$, the run $s$ will satisfy safety property of all motion primitives.
	\end{proof}
	
	Finally, the global layer encodes the CLTLB($\mathcal{D}$) task specifications $\phi_{\mathcal{G}}$ and $\phi_{\mathcal{P}}(\mathcal{M})$ into forms that are solvable by an SMT solver such as Z3\cite{de2008z3}, in the Constraint Generator shown in Fig. \ref{fig:framework}. Encoding the ITMP problem using CLTLB($\mathcal{D}$) language allows the description of a wide range of system properties that the satisfiability problem is decidable. If the global layer specification is satisfiable, a plan $s$ is then generated and extracted to the Motion Primitives Supervisor that enforces a sequential execution of this plan. In the following sections we will describe our framework in detail using the Clean Up example.

\section{Motion Primitives}\label{sec:prim}
In the Clean Up scenario, four motion controllers are needed so $\mathcal{U}=\{u_1,...,u_4\}$ where $u_1=$ go to, $u_2=$ push the door, $u_3=$ pick up and $u_4=$ leave. The following subsections will introduce how to design their motion primitives.

\subsection{Go To}

The first controller $u$ to be designed is the local navigation function which avoids obstacles that can be moving at a velocity up to $V$. Since the global layer does not take into account the  environment kinematics, the safety property must be verified at a local layer. We implement a Dynamic Window Approach \cite{fox1997dynamic} (DWA) algorithm based on the verification presented by Mitsch et. al. \cite{mitsch2013provably}.  

At every cycle time, based on the robot's sensor readings about its current position and surrounding obstacles, the DWA uses circular trajectories determined uniquely by the robot translational $v_r$ and rotational $\omega_r$ velocities. In summary, the algorithm is organized in two steps. (i) First it searches for a range of admissible $(v_r, \omega_r)$ pair that results in safe trajectories that the robot can realize in a short time frame, which is called dynamic window. (ii) Then, it chooses a $(v_r, \omega_r)$ pair in the dynamic window that maximizes the progress towards the goal.

The safety property that the DWA must satisfy is called Passive Safety Property $\phi_{ps}$\cite{macek2009towards}. This property means that the robot will never collide with the obstacle, or it will stop before collision. 
\begin{equation*}\label{eq:dwa_passive_safety_prop}
\phi_{ps} \equiv \Big(v_r = 0\Big) \vee \Big(\parallel p_r - p_o \parallel > \frac{v_r^2}{2b} + V\frac{v_r}{b} \Big)
\end{equation*}
where $b$ is the maximum deceleration, and $p_r$ and $p_o$ are the position of the robot and the obstacle, respectively.

Finally,  Mitsch et. al. \cite{mitsch2013provably} verified if this model ensures the Passive Safety Property $\phi_{ps}$ using KeYmaera. 

\begin{thm}
	If the DWA algorithm modeled with the hybrid program $dw_{ps}$ starts in a state that satisfies $\phi_{ps}$, it will always satisfies it.
	\begin{equation*}
	\phi_{ps} \rightarrow [dw_{ps}] \phi_{ps}
	\end{equation*}	
	where $dw_{ps}$ is presented in the Model 1 in \cite{mitsch2013provably}.	
\end{thm}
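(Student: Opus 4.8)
The plan is to establish the d$\mathcal{L}$ validity $\phi_{ps} \rightarrow [dw_{ps}] \phi_{ps}$ by deductive proof in the d$\mathcal{L}$ sequent calculus, as automated in KeYmaera; this is precisely the verification reported for Model~1 of \cite{mitsch2013provably}. The top-level structure of $dw_{ps}$ is a nondeterministic repetition $\alpha^*$, one pass modelling a single control cycle, so the first step is to apply the loop-induction rule with an invariant $J$. The obvious first candidate is $J \equiv \phi_{ps}$, reducing the goal to three premises: the precondition implies $J$, and $J$ implies the postcondition (both immediate since $J = \phi_{ps}$), together with the inductive step $J \rightarrow [\text{body}]\,J$ across one iteration of the control-and-plant body.

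The inductive step carries all the work. I would peel the loop body apart with the compositional box rules $[;]$, $[\cup]$, $[?]$, and $[:=]$ of \cite{platzer2010logical}, separating a discrete control stage from a continuous plant stage. In the control stage the robot nondeterministically proposes a velocity pair $(v_r,\omega_r)$ that is then filtered by a test encoding the dynamic window; this guard rejects any command under which the braking inequality $\parallel p_r - p_o \parallel > \frac{v_r^2}{2b} + V\frac{v_r}{b}$ could fail within one cycle. Propagating $J$ through the assignments and this test is routine, since $[?]$ merely conjoins the dynamic-window guard to the antecedent, leaving the plant as the sole remaining obligation.

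The continuous stage is the main obstacle. Here the robot follows a circular arc governed by the differential equations for $(x,y,\alpha,v_r)$ under an evolution-domain constraint, and the safety margin must survive the worst case of the obstacle closing at its maximum speed $V$. Rather than solve the ODE (whose trigonometric closed form is unpleasant for the ensuing arithmetic), I would prove invariance of the safety predicate along the flow using the differential-invariant machinery --- differential cut and differential weakening. The delicate point is that the bare passive-safety predicate is almost certainly not inductive on its own: $J$ must be strengthened to record the relationship among the committed velocity, the deceleration $b$, the sensed distance, and the elapsed cycle time, so that the differential cut retains enough information to re-establish the braking inequality at the end of the cycle. Once such a strengthened invariant is found, every remaining leaf goal is a sentence of nonlinear real arithmetic, decidable by quantifier elimination and discharged automatically. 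Crafting that invariant, not the arithmetic, is the crux --- and it is exactly what the hybrid program $dw_{ps}$ of \cite{mitsch2013provably} was engineered to satisfy.
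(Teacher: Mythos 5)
The paper offers no proof of this theorem at all: it is imported verbatim from Mitsch et al.\ \cite{mitsch2013provably}, where $\phi_{ps} \rightarrow [dw_{ps}]\,\phi_{ps}$ was machine-checked in KeYmaera, and the paper simply cites that external verification. Your sketch (loop induction with the passive-safety predicate as the starting invariant, peeling the control from the plant via the compositional box rules, differential cuts and differential invariants in place of solving the ODE, and real-arithmetic leaves discharged by quantifier elimination) is a faithful high-level reconstruction of exactly that KeYmaera proof, so your approach coincides with the one the paper relies on.
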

%

An example of such trajectories is sketched in Fig. \ref{fig:dwa_trajectory}. One robot passes in front of other robot executing the DWA algorithm. The DWA assigns circular trajectories to avoid the collision with the other robot, and the translational velocity is reduced based on the proximity to this robot. 

\begin{figure}[!t]
	\centering
	\includegraphics[width=1.5in]{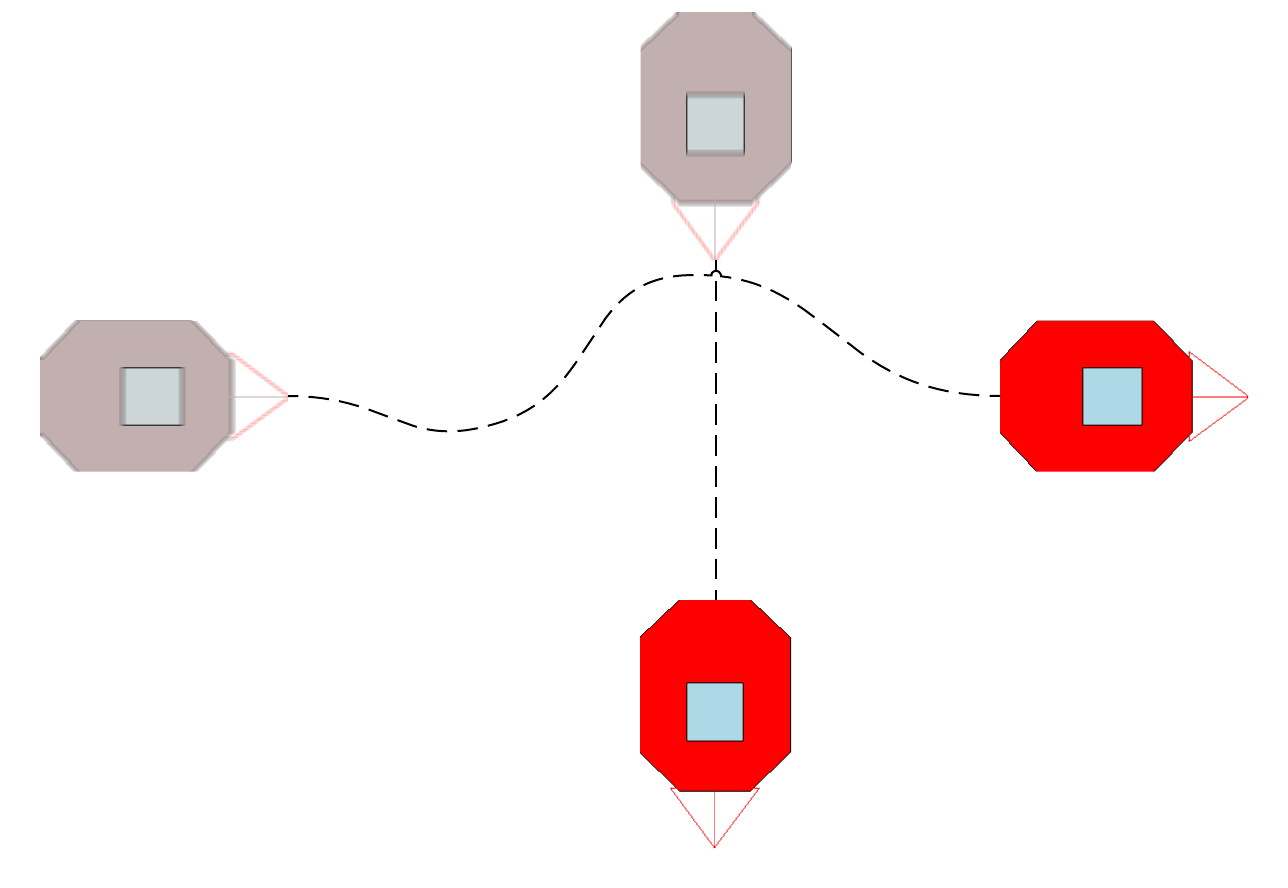}
	\caption{An example of safe circular trajectory for moving obstacles. The vehicles drive from the gray to the red in the dashed line, and the vehicle moving in straight line is the moving obstacle and the other is an executing DWA algorithm. }
	\label{fig:dwa_trajectory}
\end{figure}

Now we move from the local layer to the global layer to abstract this controller. Since the safety property $\phi_{ps}$ is an invariant property as well, it can be used as a constraint to ensure the safe motion primitive composition. Details like moving obstacles are omitted in the global layer, so we can assume that the minimum robot velocity is zero ($v_r > 0$), and the known obstacles are static ($V = 0$), thus:

\begin{corollary}\label{cor:corridor}
	If there is a trajectory between the current and next states ($q_r$ and $\bigcirc q_r$), in which the robot can fits into, then there is, at least, one possible DWA passively safe trajectory.
\end{corollary}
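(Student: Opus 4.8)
The plan is to reduce the geometric ``fitting'' hypothesis to the non-emptiness of the DWA's set of passively safe admissible velocities, exploiting the static-obstacle simplification $V=0$ introduced just above the corollary. First I would specialize the passive safety property to the global-layer abstraction: setting $V=0$ collapses $\phi_{ps}$ to $(v_r=0) \vee (\|p_r-p_o\| > \frac{v_r^2}{2b})$. The crucial observation is that the braking-distance term $\frac{v_r^2}{2b}$ is continuous in $v_r$ and vanishes as $v_r \to 0^+$, so for static obstacles the robot can always recover safety by slowing down, and the trivial disjunct $v_r = 0$ is always available.

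Next I would make precise what ``the robot fits into the trajectory'' buys us, interpreting it as the region swept by the robot's square body of side $a.l$ along the candidate trajectory $\tau$ from $q_r$ to $\bigcirc q_r$ being disjoint from every static obstacle $o_j \in \mathcal{O}$ while remaining inside $\mathcal{W}$. Since $\tau$ is a compact path and $\mathcal{O}$ is a finite union of closed rectangles, this disjointness yields a uniform clearance $\delta > 0$: at every point of $\tau$ we have $\|p_r - p_o\| \geq \delta$ for the nearest obstacle point $p_o$.

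With the clearance in hand, I would exhibit an admissible safe velocity at every point: choosing any $v_r$ with $0 < v_r < \sqrt{2b\delta}$ forces $\frac{v_r^2}{2b} < \delta \leq \|p_r - p_o\|$, so the second disjunct of $\phi_{ps}$ holds, and the DWA's admissible set therefore contains safe velocities (it always also contains the stopping velocity $v_r = 0$). Because Theorem~2 guarantees that any DWA run starting in $\phi_{ps}$ preserves $\phi_{ps}$, it remains only to produce one DWA trajectory that tracks $\tau$ within this clearance tube. Since the DWA composes circular arcs selected cycle-by-cycle and the stop option is always admissible, selecting arcs of sufficiently small step keeps the executed path inside the $\delta$-tube around $\tau$, yielding a concrete passively safe DWA trajectory.

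The main obstacle I anticipate is the gap between an arbitrary geometric trajectory in which the robot merely ``fits'' and the restricted class of piecewise-circular paths the DWA can actually realize: the hypothesis asserts only the existence of \emph{some} fitting trajectory, so I must argue that a piecewise-circular DWA approximation also fits. I would handle this by an approximation/compactness argument, using the strict positivity of $\delta$ to absorb the error of replacing $\tau$ by admissible arcs, and I would need to confirm that the dynamic window is rich enough (arbitrarily small $v_r$ together with a suitable range of $\omega_r$) to remain inside the tube. A secondary subtlety is the slightly ambiguous abstraction written as ``$v_r > 0$'' versus ``minimum robot velocity is zero''; I would read it as permitting $v_r = 0$, which is exactly what makes the braking distance vanish and the argument close.
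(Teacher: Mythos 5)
Your proposal is correct and follows essentially the same route as the paper's own (much terser) proof: specialize $\phi_{ps}$ to the static-obstacle abstraction $V=0$, observe that a trajectory the robot fits into gives positive clearance $\| p_r - p_o \| > 0$, and conclude that a sufficiently slow (or zero) velocity satisfies the passive safety disjunction along some DWA-executable trajectory. The additional machinery you supply --- the compactness argument for a uniform clearance $\delta$, the explicit admissible bound $v_r < \sqrt{2b\delta}$, and the circular-arc tracking argument --- simply fills in details that the paper's one-sentence proof leaves implicit.
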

\begin{proof}
	If $v_r > 0$ and $V = 0$, thus, if $\parallel p_r - p_o \parallel > 0$, then there exist at least one trajectory that can be executed by the DWA algorithm that $\phi_{ps}$ holds true. 
\end{proof}

Hence, the Go To specification $\phi_{\Pi}$ in CLTLB($\mathcal{D}$) should guarantee that there exists a trajectory free of obstacles if the initial $q_r$ and goal $\bigcirc q_r$ states should be to the left, right, below or above of all obstacles (i.e. $r_{left,o}^{j} \equiv \Big( \max (\bigcirc q_r.x, q_r.x) \leq \min (o_j.x_i, o_j.x_f) - \frac{a.l}{2} \Big)$, $r_{right,o}^{j} \equiv \Big( \min (\bigcirc q_r.x, q_r.x) \geq \max (o_j.x_i, o_j.x_f) + \frac{a.l}{2} \Big)$, $r_{below,o}^{j} \equiv \Big( \max (\bigcirc q_r.y, q_r.y) \leq \min (o_j.y_i, o_j.y_f) - \frac{a.l}{2} \Big)$, $r_{above,o}^{j} \equiv \Big( \min (\bigcirc q_r.y, q_r.y) \geq \max (o_j.y_i, o_j.y_f) + \frac{a.l}{2} \Big)$),
\begin{align*}
\phi_{GoTo}^{\mathcal{O}} \equiv \forall j \in \mathcal{N}_{\mathcal{O}} : & \Box \Big[ (\pi = GoTo) \rightarrow  \\
& r_{left,o}^{j} \vee r_{right,o}^{j} \vee r_{bellow,o}^{j} \vee r_{above,o}^{j} \Big].
\end{align*}


And similarly we have $\phi_{GoTo}^{\mathcal{B}}$ to avoid colliding into objects that are not being carried (i.e. $\neg q_b^{j}.p$). Thus, the initial $q_r$ and goal $\bigcirc q_r$ states should be to the left, right, below or above of all objects (i.e. $r_{left,b}^{j} \equiv \Big( \max (\bigcirc q_r.x, q_r.x) \leq q_b^{j}.x - l^{j} \Big)$, $r_{right,b}^{j} \equiv \Big( \min (\bigcirc q_r.x, q_r.x) \geq q_b^{j}.x + l^{j} \Big)$, $r_{below,b}^{j} \equiv \Big( \max (\bigcirc q_r.y, q_r.y) \leq q_b^{j}.y - l^{j} \Big)$, $r_{above,b}^{j} \equiv \Big( \min (\bigcirc q_r.y, q_r.y) \geq q_b^{j}.y + l^{j} \Big)$, where $l^{j} = \frac{b_j.l+a.l}{2}$),
\begin{align*}
\phi_{GoTo}^{\mathcal{B}} & \equiv \forall j \in \mathcal{N}_{\mathcal{B}} : \Box \Big[ (\pi = GoTo) \wedge \neg q_b^{j}.p \rightarrow  \\
& r_{left,b}^{j} \vee r_{right,b}^{j} \vee r_{bellow,b}^{j} \vee r_{above,b}^{j} \Big].
\end{align*}


Additionally, the robot can only move inside the workspace (i.e. $rin_{x} \equiv (\mathcal{W}.x - \frac{\mathcal{W}.l}{2} + \frac{a.l}{2} \leq \bigcirc q_r.x \leq \mathcal{W}.x + \frac{\mathcal{W}.l}{2} - \frac{a.l}{2})$ and $rin_{y} \equiv (\mathcal{W}.y - \frac{\mathcal{W}.l}{2} + \frac{a.l}{2} \leq \bigcirc q_r.y \leq \mathcal{W}.y + \frac{\mathcal{W}.l}{2} - \frac{a.l}{2})$) and  won't change any object state (i.e. $p_{static}^{l} \equiv \bigcirc q_b^{l}.p = q_b^{l}.p$ when executing $Go To$, so, we have,
\begin{align*}
\phi_{GoTo} \equiv & \Box \Big[ \pi = GoTo \rightarrow \bigwedge_{l \in \mathcal{N}_{\mathcal{B}}} p_{static}^{l} \wedge rin_{x} \wedge rin_{y} \Big] \wedge \\ 
& \phi_{GoTo}^{\mathcal{O}} \wedge \phi_{GoTo}^{\mathcal{B}}
\end{align*}


\subsection{Push the Door}
Another reactive motion controller $u$ is to push the door, a straight movement in the direction of the door until it is pushed and the robot completely passes through it. The safety property $\phi_{Push}$ is that the robot must start at the initial position and go to final position (i.e. $push^j \equiv \Big(q_r = d_j.q_1 \wedge \bigcirc q_r.x = d_j.q_2.x \wedge \bigcirc q_r.y = d_j.q_2.y\Big) \vee \Big(q_r = d_j.q_2 \wedge \bigcirc q_r.x = d_j.q_1.x \wedge \bigcirc q_r.y = d_j.q_1.y\Big) $), so, we have,
\begin{align*}
\phi_{Push} \equiv \forall j \in  \mathcal{N}_{\mathcal{D}}: \Box \Big[ & \pi = Push_j \rightarrow \bigwedge_{l \in \mathcal{N}_{\mathcal{B}}} p_{static}^{l} \wedge push^j \Big]
\end{align*}

\subsection{Pick up and Leave}
Finally, the pick up and leave motion primitives describe the robot and objects dynamics. The safety property for those controllers does not depend on the robot or environment kinematics, so it does not require a verification in KeYmaera neither. So, we assume that the robot can pick up the object with the posing at $0\degree$. Hence, to pick an object up, the robot cannot be carrying any object (i.e. $\neg q_b^{l}.p$) and will carry the object $j$ (i.e. $\bigcirc p_{carry}^{j,l} \equiv (j=l \rightarrow \bigcirc q_b^{l}.p) \wedge (j \neq l \rightarrow \neg \bigcirc q_b^{l}.p)$). Also, the robot states will not change (i.e. $r_{static} \equiv (q_r.x = \bigcirc q_r.x) \wedge (q_r.y = \bigcirc q_r.y) \wedge (q_r.{\alpha} = \bigcirc q_r.{\alpha})$) and it will be posing in front of object (i.e. $r_{object}^{j} \equiv (q_r.\alpha = 0) \wedge (q_r.y = q_b^{j}.y) \wedge (q_r.x = q_b^{j}.x - l^j)$),
\begin{align*}
\phi_{PickUp} \equiv & \forall j \in \mathcal{N}_{\mathcal{B}}: \Box \Big[ \pi = PickUp_j \rightarrow \\
& \bigwedge_{\forall l \in \mathcal{N}_{\mathcal{B}}}\Big(\neg q_b^{l}.p \wedge \bigcirc p_{carry}^{j,l}\Big) \wedge r_{static} \wedge r_{object}^{j}\Big].
\end{align*}

Correspondingly, we leave the object at the same angle. Thus, to drop an object off, the robot should be carrying the object $j$ (i.e. $p_{carry}^{l} \equiv (j=l \rightarrow q_b^{l}.p) \wedge (j \neq l \rightarrow \neg q_b^{l}.p)$) and then drops it off (i.e. $\neg \bigcirc q_b^{l}.p$). Moreover, the robot will not change the initial and final states (i.e. $r_{static}^i$) and the object will be left next to it at $0^o$ (i.e. $b_{left}^{j} \equiv (q_r.\alpha = 0.0) \wedge (\bigcirc q_b^{j}.y = q_r.y) \wedge (\bigcirc q_b^{j}.x = q_r.x + l^j)$). However, we cannot leave the object over other objects. Therefore, the next object position should not have overlap with any other objects (i.e. $b_{left,b}^{j,l} \equiv \Big( \bigcirc q_b^{j}.y \leq \bigcirc q_b^{l}.y - l_b^{j,l} \Big)$, $b_{right,b}^{j,l} \equiv \Big( \bigcirc q_b^{j}.y \geq \bigcirc q_b^{l}.y + l_b^{j,l} \Big)$, $b_{below,b}^{j,l} \equiv \Big( \bigcirc q_b^{j}.x \leq \bigcirc q_b^{l}.x - l_b^{j,l} \Big)$, $b_{above,b}^{j,l} \equiv \Big( \bigcirc q_b^{j}.x \geq \bigcirc q_b^{l}.x + l_b^{j,l} \Big)$, where $l_b^{j,l} = \frac{b_j.l+b_l.l}{2}$),
\begin{align*}
\phi_{Leave}^{\mathcal{B}} \equiv & \forall j,l \in \mathcal{N}_{\mathcal{B}}, j \neq l : \Box \Big[ \pi = Leave_j \wedge \neg q_b^{l}.p \rightarrow \\ 
& b_{left,b}^{j,l} \vee b_{right,b}^{j,l} \vee b_{below,b}^{j,l} \vee b_{above,b}^{j,l} \Big].
\end{align*}

And neither over an obstacle, (i.e. $b_{left,o}^{j,l} \equiv \Big( \bigcirc q_b^{j}.y \leq \min (o.x_i, o.x_f) - \frac{b_j.l}{2} \Big)$, $b_{right,o}^{j,l} \equiv \Big( \bigcirc q_b^{j}.y \geq \max (o.x_i, o.x_f) + \frac{b_j.l}{2} \Big)$, $b_{below,o}^{j,l} \equiv \Big( \bigcirc q_b^{j}.x \leq \min (o.y_i, o.y_f) - \frac{b_j.l}{2} \Big)$, $b_{above,o}^{j,l} \equiv \Big( \bigcirc q_b^{j}.x \geq \max (o.y_i, o.y_f) + \frac{b_j.l}{2} \Big)$),
\begin{align*}
\phi_{Leave}^{\mathcal{O}} \equiv & \forall j \in \mathcal{B}, l \in \mathcal{O} : \Box \Big[ \pi = Leave_j \rightarrow \\
& b_{left,o}^{j,l} \vee b_{right,o}^{j,l} \vee b_{below,o}^{j,l} \vee b_{above,o}^{j,l} \Big].
\end{align*}

Hence,
\begin{align*}
\phi_{Leave} \equiv & \forall j \in \mathcal{N}_{\mathcal{B}}: \Box \Big[ \pi = Leave_j \rightarrow \\
& \bigwedge_{\forall l \in \mathcal{N}_{\mathcal{B}}} \Big(p_{carry}^{j,l} \wedge \neg \bigcirc q_b^{l}.p \Big) \wedge r_{static} \wedge b_{left}^{j}\Big] \wedge \\
& \phi_{Leave}^{\mathcal{B}} \wedge \phi_{Leave}^{\mathcal{O}}.
\end{align*}

Finally, if $\pi \neq Leave_j$, the object does not change position (i.e. $b_{static}^j \equiv (\bigcirc q_b^{j}.x = q_b^{j}.x) \wedge (\bigcirc q_b^{j}.y = q_b^{j}.y) \Big)$).
\begin{align*}
\phi_{carry} \equiv \forall j \in \mathcal{N}_{\mathcal{B}} : \Box \Big[ \Big( (\pi \neq Leave_j) \rightarrow b_{static}^j \Big].
\end{align*}


\section{Composition of motion primitives}\label{sec:comp}
The Motion Primitive Specifications $\phi_{\mathcal{P}}(\mathcal{M})$ are the conjunctions of the specifications from each single motion primitive. For the Clean Up scenario, this specification is,
\begin{align*}
\phi_{\mathcal{P}}^{Clean}(\mathcal{M}) \equiv & \phi_{GoTo} \wedge \phi_{Push} \wedge \phi_{PickUp} \wedge \phi_{Leave_b} \wedge \phi_{carry}
\end{align*} 
Now we can compose the motion primitives by encoding the Task Specification $\phi_{\mathcal{G}}$ and the Motion Primitive Specifications $\phi_{\mathcal{P}}(\mathcal{M})$ to Z3 SMT solver. If the specifications are satisfiable, the SMT solver will output a feasible plan $s$. 

We encode only CLTLB($\mathcal{D}$) formulas with no nested path quantifiers, but it is possible to encode nested formulas as well \cite{bersani2010bounded}. Since Z3 is a decision procedure for the combination of quantifier-free first-order logic with theories for linear arithmetic \cite{de2008z3}, we encode each state variable as an array whose size depends on the length of the trace $K$. For example, each robot state $q_r.x(k) \in Q_r$ will be encoded as an array such that $q_r.x[k]$. Each object state is a two dimensional array such that each element is $q_b[j].x[k]: j \in \mathcal{N}_{\mathcal{B}}$. Additionally, each motion primitives $\pi(k) \in Q_{\pi}$ will be an array such that each element is $\pi[k]$. Hence, the \textit{a.t.t.} operator $\bigcirc$ can be encoded by adding or subtracting the array index, for instance, $\bigcirc q_r.x \equiv q_r.x[k+1]$ at instant $k$. Therefore, a state formula $\psi$, defined as $\psi \equiv p \mid R(\varphi_1, \varphi_2,..., \varphi_n) \mid \neg \psi \mid \psi_1 \wedge \psi_2$, can be encoded to quantifier-free first-order logic formulas $\Psi[k]$, where $k \in \mathcal{N}_{\rho}$ is the instant that $\psi$ holds true. For instance, if $\psi \equiv q_b^0.p$, then $\Psi[2]$ holds true if $q_b^0.p$ holds true at instant $2$.  

Encoding temporal logic quantifiers to first order logic requires quantifiers $\forall$ and $\exists$ in relation to the time instants. The quantifier $\forall k \in \mathcal{N}_{\rho} : \Psi[k]$ can be implemented using for loop. The $\exists k \in \mathcal{N}_{\rho} : \Psi[k]$ can be encoded by using an auxiliary variable $j$ such as $\forall k \in \mathcal{N}_{\rho}: (k = j) \rightarrow \Psi[k] \wedge j \in \mathcal{N}_{\rho}$ and, then, also encoded using a for loop. Therefore, we can encode CLTLB($\mathcal{D}$) quantifiers to Z3, for example,

\begin{itemize}
	\item $\bigcirc^j \psi \Longleftrightarrow j \in \mathcal{N}_{\rho} \wedge \Psi[j]$
	\item $\psi_1 \mathbf{U} \psi_2 \Longleftrightarrow \begin{cases}
	\Big(\bigwedge_{k \in \mathcal{N}_{\rho}} \Big[(k < j \rightarrow \Psi_1[k]) \wedge \\
	(k = j \rightarrow \Psi_2[k])\Big] \wedge j \in \mathcal{N}_{\rho}
	\end{cases}$
	\item $\Box \psi \Longleftrightarrow \bigwedge_{k \in \mathcal{N}_{\rho}} \Psi[k]$
	\item $\diamondsuit \psi \Longleftrightarrow \bigwedge_{k \in \mathcal{N}_{\rho}} \Big[k = j \rightarrow \Psi[k]\Big] \wedge j \in \mathcal{N}_{\rho}$
	\item $Last [\psi] \Longleftrightarrow \Psi[K]$
\end{itemize}

Finally, let $\varphi_1$ and $\varphi_2$ be \textit{a.t.t.}'s, the functions $\max(\varphi_1,\varphi_2)$ and $\min(\varphi_1,\varphi_2)$ are encoded with SMT function $ite$, i.e. $\max(\varphi_1,\varphi_2) \equiv ite(\varphi_1 > \varphi_2, \varphi_1,\varphi_2)$ and $\min(x,y) \equiv ite(\varphi_1 < 
\varphi_2,\varphi_1,\varphi_2)$. Now, we can define a task specification in CLTLB($\mathcal{D}$) and find an integrated task and motion plan $s$ for the scenario in the Example \ref{ex:example01} as shown below.

\begin{example}\label{ex:example02}
	A task can be any temporal logic describing how the robot or the objects should move in the environment. For example, it could be to bring the objects in the workspace of Fig. \ref{fig:example01} to the temporary rectangular area $Q_{temp}^j \equiv (-1500 \leq q_b^j.x \leq -500) \wedge (-2500 \leq q_b^j.y \leq -1000) \wedge \neg q_b^j.p$ (which can be represented by the coordinates of its upper-left and lower-right vertices) and later leave them in the state $Q_b^{goal} \equiv q_b^1 = \langle 1900, 1000, false \rangle \wedge q_b^2 = \langle 2000, 1000, false \rangle$. The specification of this task is,
	\begin{align*}
	\phi_{\mathcal{G}}^{Clean} \equiv \bigwedge_{\forall j \in N_{\mathcal{B}}} \diamondsuit \Big[  Q_{temp}^j \Big] \wedge Last \Big[ Q_b^{goal} \Big]
	\end{align*}
	
	If we set the trace length $K = 24$ and encode the formula $\phi_{\mathcal{G}}^{Clean} \wedge \phi_{\mathcal{P}}^{Clean}(\mathcal{M})$ to a SMT solver, we can find the following satisfiable plan,
{\small		\begin{align*}
		s = \{&\langle \Pi_1, (-2000, -500, 0) \rangle, \langle \Pi_2, (-1000, -500, 0) \rangle,\\
		& \langle \Pi_1, (1650, -1000, 0) \rangle, \langle \Pi_3, (1650, -1000, 0) \rangle, \\ 
		& \langle \Pi_1, (-998, -1251, 0) \rangle, \langle \Pi_4, (-998, -1251, 0) \rangle, \\ 
		& \langle \Pi_1, (-999, -999, 0) \rangle, \langle \Pi_1, (1750, -1000, 0) \rangle, \\ 
		& \langle \Pi_3, (1750, -1000, 0) \rangle, \langle \Pi_1, (-751, -1001, 0) \rangle, \\ 
		& \langle \Pi_4, (-751, -1001, 0) \rangle, \langle \Pi_3, (-751, -1001, 0) \rangle, \\ 
		& \langle \Pi_1, (-1000, -500, 90) \rangle, \langle \Pi_2, (-1000, 500, 90) \rangle, \\ 
		& \langle \Pi_1, (1750, 1000, 0) \rangle, \langle \Pi_4, (1750, 1000, 0) \rangle, \\ 
		& \langle \Pi_1, (-1000, 500, 270) \rangle, \langle \Pi_2, (-1000, 500, 270) \rangle, \\ 
		& \langle \Pi_1, (-998, -1251, 0) \rangle, \langle \Pi_3, (-998, -1251, 0) \rangle, \\ 
		& \langle \Pi_1, (-1000, -500, 90) \rangle, \langle \Pi_2, (-1000, 500, 90) \rangle, \\ 
		& \langle \Pi_1, (1650, 1000, 0) \rangle, \langle \Pi_4, (1650, 1000, 0) \rangle	\}
		\end{align*}
		where $\Pi_1=$ go to, $\Pi_2=$ push the door, $\Pi_3=$ pick up and $\Pi_4=$ leave motion primitives.}
%
%
%
	
\end{example}

	Note that this plan is safe to moving obstacles as well because the motion primitive Go To can handle it in the local layer. For example, when executing $\langle \Pi_1, (-2000, -500, 0) \rangle$, if a human appears moving inside the robot straight trajectory to position $(-2000, -500)$, it will reduce the velocity properly and try another circular trajectory that does not leads to a collision, shown in Fig. \ref{fig:dwa_trajectory}. If the robot cannot avoid the collision, it is formally proven that it will be stopped before it. Therefore, if the moving obstacle can and chooses to avoid the collision too, the robot will be always safe. It allows the robot always to find a new plan $s$ in a receding horizon strategy. Hence, if the moving obstacles change the environment in a way that the plan is not feasible, we can always update the scene description $\mathcal{M}$ to search for a new satisfiable plan at current state. 

\section{Simulation/Experimental Results}\label{sec:sim}

The purpose of the experiments in this section is to determine which parameters can affect the computation time of the framework. The benchmarks shown here are relevant to designing an optimization algorithm that finds the best path using CoSMoP motion planning. All the experiments were executed on Linux with an Intel i7 processor and 8GB memory. 

The environment is one floor of a building with square layout. It has multiple rooms with push-pull doors that permit the robot to move between rooms, shown in the Fig. \ref{fig:rooms}. The robot starts in the room marked with S, and it needs to reach the room with G. All doors permit the robot to move in the direction of the goal. To increase the complexity of the environment, we can increase its size and the number of rooms. It is executed 35 times for each scene description. The time average and standard variation are then calculated.

\begin{figure}[!t]
	\centering
	\includegraphics[width=1.5in]{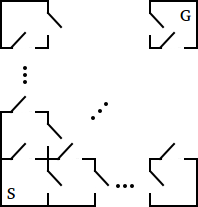}
	\caption{Environment framework used in the execution time experiments.}
	\label{fig:rooms}
\end{figure}

To increase the environment size does not seem to affect the execution time, as shown in table \ref{tab:increasingsz}. However, it expressively increases when we raise the number of rooms, see table \ref{tab:increasingroom}. It suggests that the environment complexity to solve the trajectory is only affected by the number of obstacles and doors in the workspace. Moreover, we can assert that the precision on the robot state integer variables won't change the execution time, for example, if we change the position precision from $mm$ to $\mu m$. The size of the trace $K$ also increases the execution time as it increases, as shown in the table \ref{tab:increasingk}. Although, the number of rooms (i.e. obstacles and doors) can affect the size of satisfiable trace $K$, the table \ref{tab:increasingcomplex} shows that CoSMoP finds a reachable trajectory in a reasonable time for a significantly complex Scene Description. 

\begin{table}
	\caption{Benchmarks increasing the environment size.}
	\label{tab:increasingsz}
	\begin{tabular}{|c|c|c|c|}
		\hline Environment (m) & $\#$ Rooms & $K$ & Time (avg $\pm$ std) \\ \hline
		$4 \times 4$ & 9 & 14 & $30.2ms \pm 0.0067$ \\ 
		$8 \times 8$ & 9 & 14 & $30.9ms \pm 0.0057$ \\ 
		$16 \times 16$ & 9 & 14 & $30.2ms \pm 0.0088$ \\ 
		$32 \times 32$ & 9 & 14 & $30.6ms \pm 0.019$ \\ 
		$64 \times 64$ & 9 & 14 & $30.7ms \pm 0.0052$ \\ 
		$128 \times 128$ & 9 & 14 & $30.1ms \pm 0.056$ \\ 
		$256 \times 256$ & 9 & 14 & $30.0ms \pm 0.0047$ \\ \hline
	\end{tabular} 
\end{table}

\begin{table}
	\caption{Benchmarks increasing the number of rooms.}
	\label{tab:increasingroom}
	\begin{tabular}{|c|c|c|c|}
		\hline Environment (m) & $\#$ Rooms & $K$ & Time (avg $\pm$ std) \\ \hline
		$32 \times 32$ & 9 & 50 & $121.5ms \pm 0.020$ \\ 
		$32 \times 32$ & 25 & 50 & $434.3ms \pm 0.13$ \\ 
		$32 \times 32$ & 49 & 50 & $1007.0ms \pm 0.94$ \\ 
		$32 \times 32$ & 81 & 50 & $3994.1ms \pm 12$ \\ \hline
	\end{tabular} 
\end{table}

\begin{table}
	\caption{Benchmarks increasing the size of the trace $K$.}
	\label{tab:increasingk}
	\begin{tabular}{|c|c|c|c|}
		\hline Environment (m) & $\#$ Rooms & $K$ & Time (avg $\pm$ std) \\ \hline
		$32 \times 32$ & 25 & 26 & $181.6ms \pm 3.0$ \\ 
		$32 \times 32$ & 25 & 32 & $284.9ms \pm 0.031$ \\ 
		$32 \times 32$ & 25 & 38 & $290.4ms \pm 0.027$ \\ 
		$32 \times 32$ & 25 & 44 & $344.4ms \pm 0.034$ \\ 
		$32 \times 32$ & 25 & 50 & $369.1ms \pm 0.032$ \\ \hline
	\end{tabular} 
\end{table}

\begin{table}
	\caption{Benchmarks increasing the problem complexity.}
	\label{tab:increasingcomplex}
	\begin{tabular}{|c|c|c|c|}
		\hline Environment (m) & $\#$ Rooms & $K$ & Time (avg $\pm$ std) \\ \hline
		$32 \times 32$ & 9 & 14 & $34.7ms \pm 0.0051$ \\ 
		$32 \times 32$ & 16 & 20 & $69.3ms \pm 0.026$ \\ 
		$32 \times 32$ & 25 & 26 & $298.2ms \pm 0.28$ \\ 
		$32 \times 32$ & 36 & 32 & $461.6ms \pm 0.59$ \\ 
		$32 \times 32$ & 49 & 38 & $856.2ms \pm 0.63$ \\ 
		$32 \times 32$ & 64 & 44 & $1126.9ms \pm 0.26$ \\ 
		$32 \times 32$ & 81 & 50 & $3909.3ms \pm 1.7$ \\ \hline
	\end{tabular} 
\end{table}

\section{Conclusion}\label{sec:conclusion}

We proposed the CoSMoP, an ITMP approach using formal bottom-up design for mobile robot planning. It synthesizes a sequential execution of motion primitives that ensures the task specification and safety properties even under a dynamic environment with moving obstacles. The main advantage of our approach is that we can handle the moving obstacles dynamics and some uncertainties about the environment at local motion primitive, which increase its robustness. Additionally, we evaluated CoSMoP in a motivating example showing that the ITMP can synthesize a correct plan, we also studied how different parameters affect the execution time. Future works includes implementation on real robot and extensions to multiple robots.  

%
%


\small
\bibliographystyle{plain}

\bibliography{cosmop}

\end{document}